\documentclass[a4paper,fleqn]{cas-sc}

\usepackage[authoryear,longnamesfirst]{natbib}

\usepackage{url}
\usepackage{array}
\usepackage{amsmath}
\usepackage{amssymb}
\usepackage{amsthm}
\usepackage{algorithm}
\usepackage{algorithmic}
\usepackage{xcolor}
\usepackage[caption=false,font=normalsize,labelfont=sf,textfont=sf]{subfig}
\usepackage{multirow}

\def\tsc#1{\csdef{#1}{\textsc{\lowercase{#1}}\xspace}}
\tsc{WGM}
\tsc{QE}

\usepackage{amsmath,amsfonts,bm}

\def\Figref#1{Figure~\ref{#1}}

\def\Secref#1{Section~\ref{#1}}

\def\eqref#1{equation~\ref{#1}}
\def\Eqref#1{Equation~\ref{#1}}

\def\Algref#1{Algorithm~\ref{#1}}

\def\1{\bm{1}}

\def\vone{{\bm{1}}}

\def\vs{{\bm{s}}}

\def\mA{{\bm{A}}}

\def\mD{{\bm{D}}}
\def\mE{{\bm{E}}}

\def\mH{{\bm{H}}}

\def\mK{{\bm{K}}}

\def\mM{{\bm{M}}}

\def\mP{{\bm{P}}}
\def\mQ{{\bm{Q}}}

\def\mV{{\bm{V}}}
\def\mW{{\bm{W}}}
\def\mX{{\bm{X}}}

\def\mZ{{\bm{Z}}}

\DeclareMathAlphabet{\mathsfit}{\encodingdefault}{\sfdefault}{m}{sl}
\SetMathAlphabet{\mathsfit}{bold}{\encodingdefault}{\sfdefault}{bx}{n}

\def\gE{{\mathcal{E}}}

\def\gG{{\mathcal{G}}}

\def\gN{{\mathcal{N}}}

\def\gV{{\mathcal{V}}}

\def\sR{{\mathbb{R}}}
\def\sS{{\mathbb{S}}}

\newcommand{\R}{\mathbb{R}}

\newcommand{\softmax}{\mathrm{softmax}}

\DeclareMathOperator*{\argmin}{arg\,min}

\DeclareGraphicsExtensions{.png}
\newtheorem{theorem}{Theorem}
\newtheorem{assumption}{Assumption}
\def\Appref#1{Appendix~\ref{#1}}
\def\cite#1{\citep{#1}}

\begin{document}
\let\WriteBookmarks\relax
\def\floatpagepagefraction{1}
\def\textpagefraction{.001}

\shorttitle{Neighbourhood Transformer}    

\shortauthors{}  

\title [mode = title]{Neighbourhood Transformer: Switchable Attention for Monophily-Aware Graph Learning}  

\tnotemark[1] 

\tnotetext[1]{} 

%


\author[1]{Yi Luo}
\author[2,1]{Xu Sun}
\author[3]{Guangchun Luo}
\author[1,4]{Aiguo Chen}
\cormark[4]
\affiliation[1]{organization={Laboratory of Intelligent Collaborative Computing, University of Electronic Science and Technology of China}}
\affiliation[2]{organization={School of Computer Science and Engineering, University of Electronic Science and Technology of China}}
\affiliation[3]{organization={Ubiquitous Intelligence and Trusted Services Key Laboratory of Sichuan Province}, country={China}}

\cortext[1]{Corresponding author: Aiguo Chen}
\fntext[1]{Email: \url{agchen@uestc.edu.cn}}


\begin{abstract}
Graph neural networks (GNNs) have been widely adopted in engineering applications such as social network analysis, chemical research and computer vision.
However, their efficacy is severely compromised by the inherent homophily assumption, which fails to hold for heterophilic graphs where dissimilar nodes are frequently connected.
To address this fundamental limitation in graph learning, we first draw inspiration from the recently discovered monophily property of real-world graphs, and propose Neighbourhood Transformers (NT), a novel paradigm that applies self-attention within every local neighbourhood instead of aggregating messages to the central node as in conventional message-passing GNNs.
This design makes NT inherently monophily-aware and theoretically guarantees its expressiveness is no weaker than traditional message-passing frameworks.
For practical engineering deployment, we further develop a neighbourhood partitioning strategy equipped with switchable attentions, which reduces the space consumption of NT by over 95\% and time consumption by up to 92.67\%, significantly expanding its applicability to larger graphs.
Extensive experiments on 10 real-world datasets (5 heterophilic and 5 homophilic graphs) show that NT outperforms all current state-of-the-art methods on node classification tasks, demonstrating its superior performance and cross-domain adaptability.
The full implementation code of this work is publicly available at \url{https://github.com/cf020031308/MoNT} to facilitate reproducibility and industrial adoption.
\end{abstract}


\begin{highlights}
    \item Our method captures the newly explored monophily patterns in graphs.
    \item Provides compatibility with conventional homophilic-based approaches.
    \item Proposes neighbourhood partitioning, vastly reducing memory and time.
    \item Shows state-of-the-art node classification accuracy on diverse datasets.
\end{highlights}

\begin{keywords}
    Graph Neural Networks \sep Transformer \sep Heterophily \sep Node Classification
\end{keywords}

\maketitle

\section{Introduction}\label{sec:intro}

Graph neural networks (GNNs) have emerged as a fundamental technology in the realm of graph learning, garnering extensive research interest and a wealth of practical applications over the past decade~\cite{DUC2026113682}.
Their versatility has been demonstrated across a wide array of disciplines.
In the domain of social network analysis, for instance, GNNs are utilized to predict user interactions and to pinpoint pivotal influencers within the network~\cite{AN2026114058}.
Within the field of patent analysis, GNNs are employed to model intricate semantic and structural relationships for intellectual property knowledge discovery~\cite{ZENG2026113959}.
In the realm of intelligent transportation, GNNs are adopted to model spatial-temporal dependencies among road segments and achieve accurate traffic flow prediction~\cite{WU2026114236}.
Central to the functionality of GNNs is the message passing paradigm, which facilitates the exchange of information between nodes and their adjacent neighbours, thereby allowing GNNs to harness both node features and the structural topology of the graph~\cite{MP}.

Message passing (MP) implicitly posits that adjacent nodes are relevant or similar to one another, as is often the case in social networks where connected individuals tend to share similar interests~\cite{Homo1, Homo2, Homo3}.
However, recent investigations have called into question this homophily assumption by introducing a collection of heterophilic benchmarks and scenarios, where the premise of similarity between neighbouring nodes does not consistently apply~\cite{Geom-GCN, HetGraph, GAT-sep}.
For instance, in financial transaction networks, the majority of users with whom fraudsters engage in transactions are not engaged in fraudulent activities themselves~\cite{LU2026113709}.
On such heterophilic graphs, researchers have noted that a node often exhibits similarity not with its immediate neighbours, but with its 2-hop neighbours, or the neighbours of its neighbours~\cite{HetGNN,H2GCN}.
This characteristic, referred to as monophily~\cite{Monophily1, Monophily2}, is also prevalent in homophilic graphs~\cite{EvenNet, GraphACL}.
Consequently, the monophily assumption appears to be a universal trait in graphs, irrespective of their degree of homophily.

\begin{figure*}[!t]
\centering
\subfloat[]{
    \includegraphics[page=1,width=0.25\textwidth]{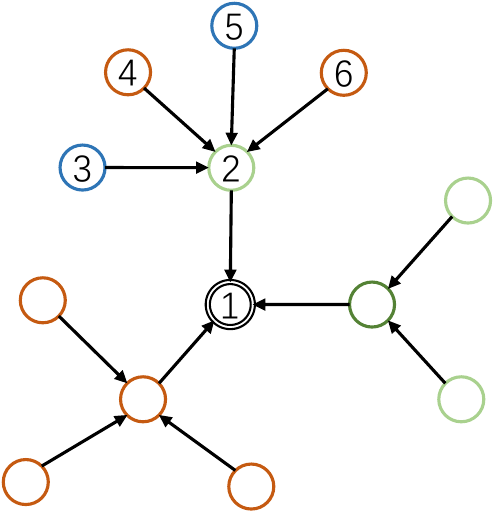}
    \label{fig:mp}}
\hfil
\subfloat[]{
    \includegraphics[page=2,width=0.25\textwidth]{figures.pdf}
    \label{fig:gt}}
\hfil
\subfloat[]{
    \includegraphics[page=3,width=0.25\textwidth]{figures.pdf}
    \label{fig:nt}}
\caption{
    An illustration of computing representations for node $v_1$ (represented by a double-lined circle) using various mechanisms.
    \textbf{(a) Message Passing}: Messages from different nodes (e.g., $v_3, v_4, v_5, v_6$) are propagated along edges towards node $v_1$ to compute its representation, but are diluted and over-squashed when passing through heterophilic bottlenecks (e.g., $v_2$).
    \textbf{(b) Graph Transformers}: Node $v_1$ aggregates messages from all nodes using self-attention. This approach may inadvertently downplay the importance of information from nearby nodes due to a reduction in the influence of topological structure.
    \textbf{(c) Neighbourhood Transformers (Ours)}: Node $v_1$ exchanges messages (depicted as squares) in each of its constituent neighbourhoods through self-attention, thereby acquiring attentive and structure-aware representations.
}
\label{fig:mpnt}
\end{figure*}

Drawing upon the monophily assumption, we introduce Neighbourhood Transformers (NT), which enables message exchanging among nodes within each neighbourhood through self-attention~\cite{Transformer} and constructs node representations by aggregating the exchanged messages from the neighbourhoods of all its neighbours, as \Figref{fig:nt} depicts.
While NT captures monophilc patterns in graphs, we prove that it is also compatible with MP.
Thus NT can serve as a replacement of traditional MP with greater or equal expressiveness.

\begin{figure*}[!t]
\centering
\includegraphics[width=\textwidth]{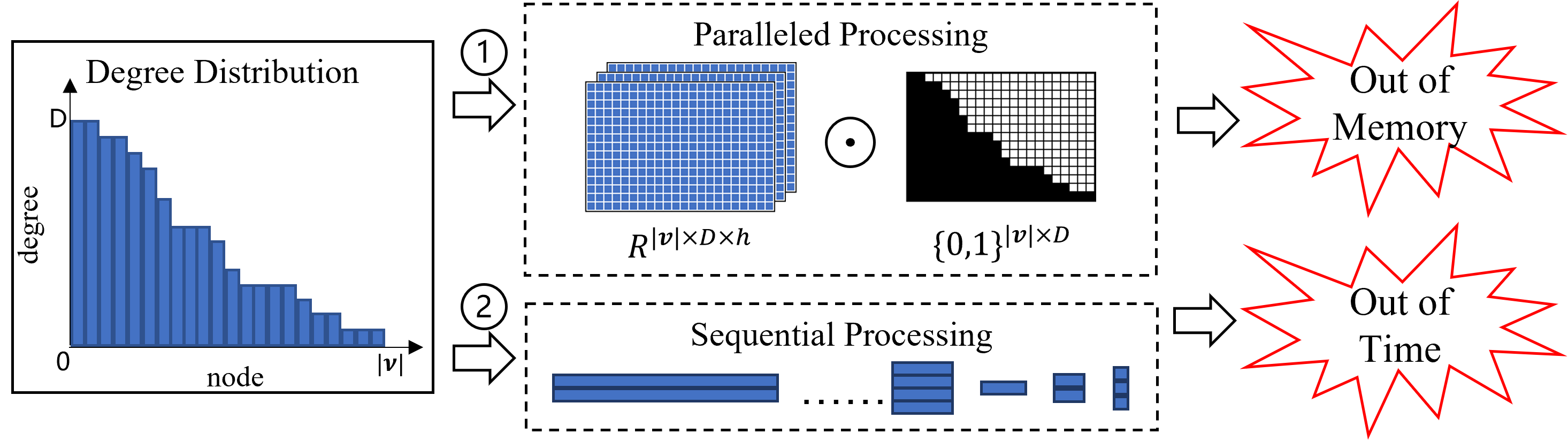}
\caption{
    Unevenly distributed neighbourhood sizes.
    \textcircled{1} \textbf{Paralleled Processing} pads node features of all neighbourhoods and processes the padded tensor in a single operation, occupying excessive memory when the node degree distribution is long-tailed.
    \textcircled{2} \textbf{Sequential Processing} organizes neighbourhoods by size and processes node features group by group, consuming prohibitive time when the node degree distribution is dispersed.
}
\label{fig:neidist}
\end{figure*}

A pivotal challenge of applying self-attention in each neighbourhoods is the high complexity resulting from the variable sizes of neighbourhoods.
In real-world graphs, the distribution of node degrees, correspond to the neighbourhood sizes, tends to be scattered and follow a long-tailed pattern, characterized by a small number of nodes with high degrees and a large number of nodes with low degrees~\cite{Long-tail}.
This attribute presents NT with a quandary, as illustrated in \Figref{fig:neidist}: either pad an excessive amount of redundant space to facilitate parallel processing~\cite{Transformer} or tolerate considerable time overhead to implement sequential processing~\cite{TinyGNN}.
Moreover, the quadratic complexity of self-attention must also be taken into account when processing large neighbourhoods.
To tackle these challenges, we incorporate an efficient variant~\cite{LinearAttention} of self-attention and devise a neighbourhoods partitioning strategy, significantly diminishing both the space and time requirements in NT.
For instance, when applied to the Tolokers dataset~\cite{GAT-sep}, our method reduces the memory footprint from over 80GB, necessitated by parallel processing, to less than 4GB and accomplishes the training in only 7.33\% of the time required for sequential processing.
These optimizations render NT practical, enabling us to conduct extensive experiments and assess its superiority against state-of-the-art baselines.

In summary, our contributions include \textbf{1)} a model, Neighbourhood Transformers, designed to harness the recently identified property of monophily within real-world graphs, \textbf{2)} a neighbourhood partitioning strategy equipped with switchable attentions to reduce space and time consumtions of NT, and \textbf{3)} extensive experiments across diverse benchmark graphs and thorough ablation studies.

\section{Related Works}

\textbf{Heterophlilic GNNs} (HGNN) are improved GNNs to address the challenges posed by heterophily~\cite{Geom-GCN, GloGNN, FAGCN, GBK-GNN, JacobiConv}.
Numerous prominent strategies of HGNN have been proposed in recent years.
For example, SPS-GAD~\cite{ZHU2026129639} segregates the graph into homophilic, ambiguous, and heterophilic subgraphs for different spectral filters.
Despite these orthogonal approaches which rewire the graph~\cite{DBLP:conf/uai/YangM24, DBLP:conf/www/WoS0GL24}, a common way to bypass the heterophilic structures is to pass messages along the second-order adjacency matrices $\mA^2$~\cite{EvenNet, H2GCN, GraphACL}.
While similar in principle to the utilization of $\mA^2$, Neighbourhood Transformer (NT) distinguishes itself by enabling nodes to directly access their 2-hop neighbours.
This paradigm effectively filters out potentially detrimental information in heterophilic contexts.
For example, consider the heterophilic neighbourhood of node $v_2$ in \Figref{fig:mp}, which contains node $v_1$ and from $v_3$ to $v_6$.
Assuming that nodes are mutually beneficial if they share the same color but are detrimental if they differ, conventional message passing (even when employing $\mA^2$) squashes all conflicting information at node $v_2$, subsequently propagate these noisy messages to its neighbouring nodes.
In contrast, NT (\Figref{fig:nt}) allows nodes (e.g. $v_6$) to safely access information from same-colored nodes (e.g. $v_4$) and filter out extraneous noise with the attention mechanism.
Another strategy of HGNNs is to aggregate neighbour- or high-pass-embeddings in separation with the ego- or low-pass-embeddings~\cite{H2GCN, GAT-sep, ACMGNN, FAGCN, ALT}.
Although effective in HGNNs, this strategy is not essential for NT, because NT separately treats 1-hop neighbouring nodes as context conditions and aggregates information from 0- or 2-hop monophilic nodes with similarity, thus obviating the need for such separation.
Our experiments in appendices has demonstrated this property of NT.

\textbf{Graph Transformers} (GT) employ self-attention across the entire node set to capture global data dependencies~\cite{DBLP:conf/nips/WuJWMGS21, NAGphormer}, as depicted in \Figref{fig:gt}.
This capability is advantageous in overcoming the information bottleneck associated with message passing (MP)~\cite{Over-squashing} and in aggregating high-order information to address heterophily challenges~\cite{DBLP:conf/nips/YingCLZKHSL21}.
However, this often prioritizes distant nodes due to the lack of topological regulation, potentially overlooking nearby nodes that typically carry more relevant information~\cite{Over-globalizing}.
As a trade-off, GT necessitates the explicit integration of structural encodings~\cite{SAN, RWSE} and the implicit representations of MP to address this shortcoming~\cite{Polynormer}.
These limitations suggest that GT should be integrated with MP~\cite{Exphormer, Ma2023GraphIB, GraphGPS}, rather than replacing MP as a standalone graph learning method.
In contrast to existing methods of GT, which are essentially the combination of MP and GT, NT is an improved replacement of MP that leverages local structures without succumbing to the issue of over-globalization and can also be integrated with GT.
Therefore, we compare NT against MP in the main text, as they are direct counterparts.
A comparison between NT and GT is additionally provided in \Appref{app:gt} for completeness.

\section{Preliminaries}

We denote a graph as $\gG = (\gV, \gE)$, where $\gV = \{ v_i | i = 1, 2, \ldots, |\gV| \}$ is the node set and $\gE = \{ e_{ij} | v_i \text{ connects } v_j \}$ is the edge set.
Node features are a matrix $\mX \in \sR^{|\gV| \times d}$ where $d$ is the dimensions of node features.
The $i$-th row $\mX_{i,:}$ corresponds to the feature vector of node $v_i$.
Edge attributes are a matrix $\mE \in \sR^{|\gE| \times d_e}$ where $d_e$ is the dimensions of edge attributes.
We denote the attributes of edge $e_{ij}$ as $\mE_{(i,j),:}$.
The neighbourhood of node $v_i$ is the set of nodes that connect to $v_i$, denoted as $\gN(v_i) = \{ v_j | e_{ij} \in \gE \}$.

\subsection{Message Passing in Graph Neural Networks}

The predominant architecture of graph neural networks (GNNs) is founded on the message passing (MP) mechanism, which comprises two essential components: the combiner and the aggregator.
The combiner is a node-specific function, such as a straightforward linear transformation or a multi-layer perceptron (MLP), that encodes the input node features $\mX$ into messages $\mZ \in \R^{|\gV| \times h}$.
The aggregator is an order-invariant operation, like $\mathrm{mean}$~\cite{GCN}, $\mathrm{max}$~\cite{SAGE}, $\mathrm{sum}$~\cite{GIN}, weighted-$\mathrm{mean}$~\cite{GAT, GATv2}, or gated-$\mathrm{sum}$~\cite{GatedGCN}, which is utilized to aggregate the messages from neighbouring nodes to produce the final node representations $\mH \in \R^{|\gV| \times h}$.
In essence, an MP layer operates as follows:
\begin{align*}
    \mZ &= \phi(\text{Combiner}(\mX)), \\
    \mH_{i,:} &= \text{Aggregator}(\{ \mZ_{j,:} | v_j \in \gN(v_i) \}),
\end{align*}
where $\phi(\cdot)$ is a non-linear activation function like GELU~\cite{GELU}.

\subsection{Self-attention in Transformers}

Self-attention is the core innovation of Transformer, designed to capture intricate dependencies among the $n$ input nodes.
It initially maps the node features $\mX \in \R^{n \times d}$ into corresponding query, key, and value matrices $\mQ, \mK, \mV \in \R^{n \times h}$.
Subsequently, an $n \times n$ correlation matrix is generated by the matrix multiplication of $\mQ$ and $\mK^T$, which indicates the importance weights for information selection from $\mV$.
Concisely, a self-attention module can be formalized as:
\begin{align*}
    (\mQ, \mK, \mV) &= \mX \cdot (\mW_q, \mW_k, \mW_v), \\
    \text{SelfAttention}(\mX) &= \softmax(\frac{\mQ \cdot \mK^T}{\sqrt{h}}) \cdot \mV,
\end{align*}
where $\mW_q, \mW_k, \mW_v \in \R^{d \times h}$ are trainable parameters.

\subsection{Linear-attention in Performers}\label{sec:perf}

The quadratic complexity $O(n^2)$ of self-attention, as evidenced in the operation $\mQ \cdot \mK^T$, becomes computationally infeasible when the number $n$ of nodes is substantial.
Consequently, a variety of efficiency-enhanced attention mechanisms with linear complexity $O(n)$ have been proposed~\cite{LinearAttention}.
Among these, Performer~\cite{Performer} offers an unbiased or nearly-unbiased estimation of self-attention, complete with provable convergence and reduced variance.
It initially maps the query and key matrices $\mQ, \mK$ into $\hat\mQ, \hat\mK \in \R^{n \times p}$ using its orthogonal random features $\mP \in \R^{h \times p}$.
Next, the product of $\hat\mK^T$ and $\mV$ results in a $p \times h$ matrix, which then matrix-multiplies $\hat\mQ$ to yield the approximated attention weights.
The approximated self-attention mechanism in Performer is expressed as:
\begin{align*}
    \hat\mQ &= \exp( \frac{1}{\sqrt{h}} \cdot \mQ \cdot \mP ), \\
    \hat\mK &= \exp( \mK \cdot \mP - \frac{\| \mK \|^2}{2} ), \\
    \hat\mD &= \text{diag}(\hat\mQ \cdot (\hat\mK^T \cdot \vone_{n \times 1})), \\
    \text{SelfAttention}(\mX) &\approx \hat\mD^{-1} \hat\mQ \cdot (\hat\mK^T \cdot \mV).
\end{align*}
When $h$ is held constant and $p$ is set to $h \log h$, as recommended by Performer, the complexity of this linear-attention is $O(nph) = O(n)$.

\section{Neighbourhood Transformers}

Motivated by the monophily in graphs, we propose Neighbourhood Transformer (NT) to facilitate the message exchanging among nodes from the same neighbourhoods:
\begin{align}
    \mZ_{(j,k),:} &= \phi(\text{Combiner}(\begin{bmatrix} \mH'_{j,:} & \mH'_{k,:} \end{bmatrix})) \label{eq:comb}, \\
    \mM^{(j)}     &= \phi(\text{SelfAttention}(\mathbin{\oplus} \{ \mZ_{(j,k),:} | v_k \in \gN(v_j) \})) \label{eq:att}, \\
    \mH_{i,:}     &= \text{Aggregator}(\{ \mM^{(j)}_{(i),:} | v_j \in \gN(v_i) \}) \label{eq:agg},
\end{align}
where $\mathbin\oplus$ denotes the operation of stacking a set of row vectors into a matrix, $\mH'$ is the node representations from the previous NT layer initialized as $\mH' = \mX$.
In \Eqref{eq:comb}, the initial message $\mZ_{(j,k),:}$ from node $v_j$ to node $v_k$ is computed by combining their node representations $\mH'_{j,:}$ and $\mH'_{k,:}$.
In \Eqref{eq:att}, the matrix $\mM^{(j)} \in \R^{|\gN(v_j)| \times h}$ encapsulates messages from node $v_j$ to all its neighbours, which are comprehensively exchanged within its neighbourhood $\gN(v_j)$ through self-attention
This message exchanging exploits the monophily property, a feature that has been shown to be advantageous in heterophilic graphs, and aids nodes in capturing intricate dependencies among similar 2-hop neighbours.
In \Eqref{eq:agg}, NT computes the $h$-dimensional representation $\mH_{i,:}$ for node $v_i$ by aggregating messages $\mM^{(j)}_{(i),:}$ from all its adjacent nodes $v_j \in \gN(v_i)$.
Here, $\mM^{(j)}_{(i),:}$ corresponds to the row vector for node $v_i$ in matrix $\mM^{(j)}$.

\subsection{Theoretical Analysis}

The combination of both nodes in \Eqref{eq:comb} is pivotal in NT because the message exchanged within each neighbourhood is thus uniquely tailored to its central node, endowing NT with the diversified capability to accommodate various graphs.
As the following theorems indicate, NT would degrade to mere message passing if any part of the two nodes is excluded:
\begin{theorem}\label{thm:1hop}
When the combiner concentrates on information from central nodes of neighbourhoods, the Neighbourhood Transformer is a message passing layer.
\end{theorem}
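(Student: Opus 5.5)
Formalize the hypothesis as follows: "the combiner concentrates on information from central nodes" means that in \Eqref{eq:comb} the message $\mZ_{(j,k),:}$ depends only on $\mH'_{j,:}$. Concretely, $\text{Combiner}([\mH'_{j,:}\ \mH'_{k,:}]) = f(\mH'_{j,:})$ for some node-wise function $f$, for example a linear combiner whose weight block acting on $\mH'_{k,:}$ is zero.

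Under this hypothesis, write $\vs_j = \phi(f(\mH'_{j,:}))$. Then every row of the stacked matrix $\oplus\{\mZ_{(j,k),:} \mid v_k\in\gN(v_j)\}$ equals $\vs_j$, so the input to \Eqref{eq:att} is $\vone\,\vs_j$ with $\vone \in \R^{|\gN(v_j)|\times 1}$.

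The key step is to show that self-attention maps a matrix with identical rows to a matrix with identical rows. With $\mX=\vone\vs_j$ we get $\mV = \vone(\vs_j\mW_v)$, and similarly for $\mQ$ and $\mK$. Each row of $\softmax(\mQ\mK^T/\sqrt h)$ is a probability vector, so
\[
\softmax(\cdot)\,\vone(\vs_j\mW_v)=\vone(\vs_j\mW_v),
\]
whatever the attention scores are. The same holds for the Performer approximation of \Secref{sec:perf}: $\hat\mD^{-1}\hat\mQ\hat\mK^T$ is row-stochastic by construction, so it also fixes $\vone(\vs_j\mW_v)$. Hence
\[
\mM^{(j)}_{(i),:}=\phi(\vs_j\mW_v)=:g(\mH'_{j,:})
\]
for every $v_i\in\gN(v_j)$. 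This quantity depends only on $v_j$, not on $v_i$ or on the rest of the neighbourhood.

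Substituting into \Eqref{eq:agg} gives $\mH_{i,:}=\text{Aggregator}(\{g(\mH'_{j,:})\mid v_j\in\gN(v_i)\})$. This is exactly the MP layer of the Preliminaries, with node-wise combiner $\mX\mapsto\phi(f(\mX))\mW_v$ followed by $\phi$. This is a node-specific function and hence an admissible combiner, possibly requiring us to absorb the outer $\phi$ or to note that a composition of MLP-type maps is again an MLP.

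The only delicate point is this bookkeeping of activations and of $\mW_v$ inside the "combiner". A secondary delicacy is making precise what "concentrates on central nodes" means. I would state it explicitly as independence of the combiner from its second argument. Everything else follows directly from row-stochasticity of attention.
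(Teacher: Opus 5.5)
Your proposal is correct and follows the paper's proof essentially step for step. You drop $\mH'_{k,:}$ from the combiner so that every row of the stacked input equals $\mZ^{(j)}$, observe that self-attention then returns $\phi(\mZ^{(j)}\mW_v)$ in every row, and substitute into the aggregator to obtain $\text{Aggregator}(\{\phi(\phi(\text{Combiner}(\mH'_{j,:}))\mW_v) \mid v_j\in\gN(v_i)\})$. Your row-stochasticity argument, which covers both softmax attention and the Performer normalisation $\hat\mD^{-1}\hat\mQ\hat\mK^T$, makes explicit the step the paper simply asserts.
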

\begin{proof}
When \Eqref{eq:comb} omits $\mH'_{k,:}$ and becomes
\[\mZ_{(j,k),:} = \phi(\text{Combiner}(\mH'_{j,:})) \triangleq \mZ^{(j)},\]
\Eqref{eq:att} becomes a simple transformation of $\mZ^{(j)}$ as
\begin{align*}
    \mM^{(j)} &= \phi(\text{SelfAttention}(\oplus \{ \mZ^{(j)}, \mZ^{(j)}, \ldots, \mZ^{(j)} \} ) \\
    &= \begin{bmatrix} \phi(\mZ^{(j)} \mW_v) \\ \phi(\mZ^{(j)} \mW_v) \\ \cdots \\ \phi(\mZ^{(j)} \mW_v) \end{bmatrix}.
\end{align*}
Then, the output of \Eqref{eq:agg} is actually equivalent to the output of a message passing layer:
\begin{align*}
    \mH_{i,:} &= \text{Aggregator}({\mM^{(j)}_{(i),:} | v_j \in \gN(v_i)}) \\
    &= \text{Aggregator}({\phi(\phi(\text{Combiner}(\mH'_{j,:})) \cdot \mW_v) | v_j \in \gN(v_i)}).
\end{align*}
\end{proof}
\noindent This theorem also suggests that the optimizable combiner ensures compatibility with traditional message passing and maintains expressiveness not weaker than conventional GNNs when monophily is absent.

\begin{theorem}\label{thm:2hop}
When the combiner omits information from central nodes of neighbourhoods, the Neighbourhood Transformer with linear-attention is a two-layered message passing network.
\end{theorem}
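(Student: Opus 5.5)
Following the proof of Theorem~\ref{thm:1hop}, I would substitute the restricted combiner into the three equations and check that each stage is a message-passing operation. If \Eqref{eq:comb} omits $\mH'_{j,:}$, then $\mZ_{(j,k),:} = \phi(\text{Combiner}(\mH'_{k,:})) \triangleq \mZ_{k,:}$. This is a node-wise message depending only on the neighbour $v_k$. The attention input for neighbourhood $\gN(v_j)$ is therefore the stack $\oplus\{\mZ_{k,:} \mid v_k \in \gN(v_j)\}$ of ordinary node messages.

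Next I would expand the linear attention of \Secref{sec:perf} row by row. With $\mQ=\mZ\mW_q$, $\mK=\mZ\mW_k$ and $\mV=\mZ\mW_v$, define the node-wise features $\hat\mQ_{i,:}=\exp(\frac{1}{\sqrt h}\mZ_{i,:}\mW_q\mP)$ and $\hat\mK_{k,:}=\exp(\mZ_{k,:}\mW_k\mP-\|\mZ_{k,:}\mW_k\|^2/2)$. The key observation is that the Performer output for row $i$ of $\mM^{(j)}$ factorises as
\[
\mM^{(j)}_{(i),:}=\phi\Big(\frac{\hat\mQ_{i,:}\,\mathbf{S}_j}{\hat\mQ_{i,:}\,\mathbf{s}_j}\Big),\qquad
\mathbf{S}_j=\sum_{v_k\in\gN(v_j)}\hat\mK_{k,:}^T\mV_{k,:},\quad
\mathbf{s}_j=\sum_{v_k\in\gN(v_j)}\hat\mK_{k,:}^T.
\]
Here $(\mathbf{S}_j,\mathbf{s}_j)$ is exactly a sum-aggregation over $\gN(v_j)$ of the node-wise messages $(\hat\mK_{k,:}^T\mV_{k,:},\hat\mK_{k,:}^T)$, flattened to a vector in $\R^{p h + p}$. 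This gives the first message-passing layer, which produces an intermediate representation $\mH''_{j,:}=(\mathbf{S}_j,\mathbf{s}_j)$ for every node $v_j$.

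For the second layer I would apply \Eqref{eq:agg}. We have $\mH_{i,:}=\text{Aggregator}(\{\psi(\hat\mQ_{i,:},\mH''_{j,:})\mid v_j\in\gN(v_i)\})$, where $\psi$ is the fixed map above: a bilinear contraction, a division, and then $\phi$. This is a message-passing layer whose messages depend on the receiving node's own feature $\hat\mQ_{i,:}$ as well as the neighbour's aggregated state. I would point out that this is the standard general MP form (messages as functions of both endpoints, as in GAT/GatedGCN); it could also be viewed as a combiner acting on the concatenation of $v_i$'s features and $\mH''_{j,:}$. Composing the two stages shows that NT is a two-layer MP network over $\gN(\gN(v_i))$.

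The main obstacle is the softmax. Exact self-attention does not factorise, because $\sum_k \exp(q_i\cdot k_k)v_k$ couples $i$ and $k$ inside the exponential. As a result, the neighbourhood cannot be summarised independently of the query. This is precisely why the statement requires linear attention, and the proof must point to the kernel feature map as the step that decouples the two. A secondary issue is that the second-layer message depends on the receiver $v_i$ itself. I would handle this by arguing that the paper's MP definition permits the combiner/aggregator to condition on the target node; failing that, I would fold $\hat\mQ_{i,:}$ into the output via a postprocessing step that is fixed for each node.
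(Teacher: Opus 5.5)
Your proposal is correct and is essentially the paper's proof. The paper also writes each row of $\mM^{(j)}$ as $\phi\big(\hat\mQ_{i,:}\mK_v^{(j)}/(\hat\mQ_{i,:}\mK_1^{(j)})\big)$ and reads $\mK_v^{(j)}=\hat\mK^T\mV$ and $\mK_1^{(j)}=\hat\mK^T\vone$ over $\gN(v_j)$ as a first sum-aggregation layer; it then treats \Eqref{eq:agg} as an $\text{Aggregator}'$ that additionally takes the receiver's query $\exp(\frac{1}{\sqrt{h}}\mX_{i,:}\mW_q\mP)$, exactly as your primary argument does. Your fallback of moving $\hat\mQ_{i,:}$ into a per-node post-processing step would not work in general, since $\hat\mQ_{i,:}$ enters the ratio and $\phi$ separately for each $j$, but you do not need it.
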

\begin{proof}
When omitting $\mH'_{j,:}$ in \Eqref{eq:comb} and using linear-attention in \Eqref{eq:att}, the final representations of \Eqref{eq:agg} are
\[\mH_{i,:} = \text{Aggregator}( \{ \phi(\frac{\hat \mQ_{i,:} \cdot \mK_v^{(j)}}{\hat \mQ_{i,:} \cdot \mK_1^{(j)}}) | v_j \in \gN(v_i) \} ),\]
where $\mK_v^{(j)}$ and $\mK_1^{(j)}$ indicates the formulas $\hat \mK^T \cdot \mV$ and $\hat \mK^T \cdot \vone_{n \times 1}$ of Performer applied in the neighbourhood $\gN(v_j)$.
In detail, the element at the position $(x, y)$ of $\mK_v^{(j)}$ is $\sum\limits_{v_k \in \gN(v_j)} \hat K_{k,x} V_{k,y}$ and the $x$-th element of $\mK_1^{(j)}$ is $\sum\limits_{v_k \in \gN(v_j)} \hat K_{k,x}$.

Regarding $\mK_v^{(j)}$ and $\mK_1^{(j)}$ as the result of a message passing layer, which aggregates information of $\gN(v_j)$ to node $v_j$, NT can be rewritten as a two-layered message passing network:
\begin{align*}
\mZ^{(1)} &= (\hat \mK, \mV) = (\exp(\mX \mW_k \mP - \frac{||\mX\mW_k||^2}{2}), \mX\mW_v), \\
\mH^{(1)}_{i,:} &= (\mK_v^{(i)}, \mK_1^{(i)}) \\
    &= ([\sum\limits_{v_j \in \gN(v_i)} \hat K_{j,x} \cdot V_{j,y}]_{xy}, [\sum\limits_{v_j \in \gN(v_i)} \hat K_{j,x}]_x), \\
\mZ^{(2)} &= \mH^{(1)}, \\
\mH^{(2)}_{i,:} &= \text{Aggregator}'(\exp(\frac{1}{\sqrt{h}} \cdot \mX_{i,:} \mW_q \mP), \{ \mZ^{(2)}_{j,:} | v_j \in \gN(v_i) \}),
\end{align*}
where $\mX$ is the inputted node features.
\end{proof}

\subsection{Implementation Details}\label{sec:nt_details}

In addition to the static aggregators, such as $\mathrm{mean, max, sum}$, we harness the exchanged messages to simplify the implementation of dynamic aggregators.
Specifically, we double the dimensions of the output matrices $\mM^{(j)}$ from \Eqref{eq:att} and split them into two parts.
The first part is normalized by $\sigma$, which corresponds to the Softmax function for the weighted-$\mathrm{mean}$ aggregator and the Sigmoid function for the gated-$\mathrm{sum}$ aggregator, to generate the weights for aggregating the second part.
This results in \Eqref{eq:agg} being expressed as:
\begin{align*}
    \mH_{i,:} &= \text{Aggregator}(\{ \mM^{(j)}_{(i),:} | v_j \in \gN(v_i) \}) \\
    &= \sum\limits_{v_j \in \gN(v_i)} \sigma( \frac{1}{h} \cdot \sum\limits^{h}_{l=1} \mM^{(j)}_{(i),l}) \cdot \mM^{(j)}_{(i),h+1:2h}.
\end{align*}
The weights of the messages $\mM^{(j)}_{(i),h+1:2h}$ from node $v_j$ to node $v_i$ are determined not solely by the two endpoints but by the entire neighbourhood $\gN(v_j)$, thereby enhancing the attentiveness and suitability of our approach for NT.

\begin{figure}[!t]
\centering
\includegraphics[page=6,width=0.25\textwidth]{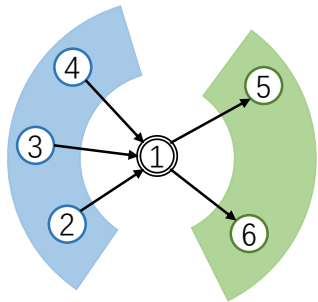}
\caption{
    In directed graphs, the source nodes ($v_2, v_3, v_4$) to the central node ($v_1$) and its destination nodes ($v_5, v_6$) construct different neighbourhoods.
}
\label{fig:dirnt}
\end{figure}

In the context of directed graphs, we adhere to the findings of \citet{Dir-GNN}, which suggest that leveraging the directionality of edges can lead to substantial improvements, particularly in heterophilic graphs.
To adapt NT for directed graphs, we introduce a directed variant called Dir-NT.
This variant differentiates between the source neighbours (nodes that have edges pointing towards the central node) and the destination neighbours (nodes that the central node points to), as shown in \Figref{fig:dirnt}.
We apply two separate NT instances to these distinct sets of neighbours.
The mathematical expression for this approach is as follows:
\begin{align*}
    \text{NT}_1(\mX, \mE) + \text{NT}_2(\mX, \mE'), \\
    \text{ where } \mE'_{(k, j),:} = \mE_{(j, k),:}, \forall e_{jk} \in \gE.
\end{align*}
Here, $\text{NT}_1$ and $\text{NT}_2$ represent two separate instances of the NT.
By summing the outputs of $\text{NT}_1$ and $\text{NT}_2$, we combine the information from both the source and destination neighbourhoods, allowing the model to capture the directional information in the graph and potentially improve the representation learning for nodes in directed graphs.

\section{Neighbourhood Partitioning}

\begin{figure*}[!t]
\centering
\includegraphics[page=5,width=\textwidth]{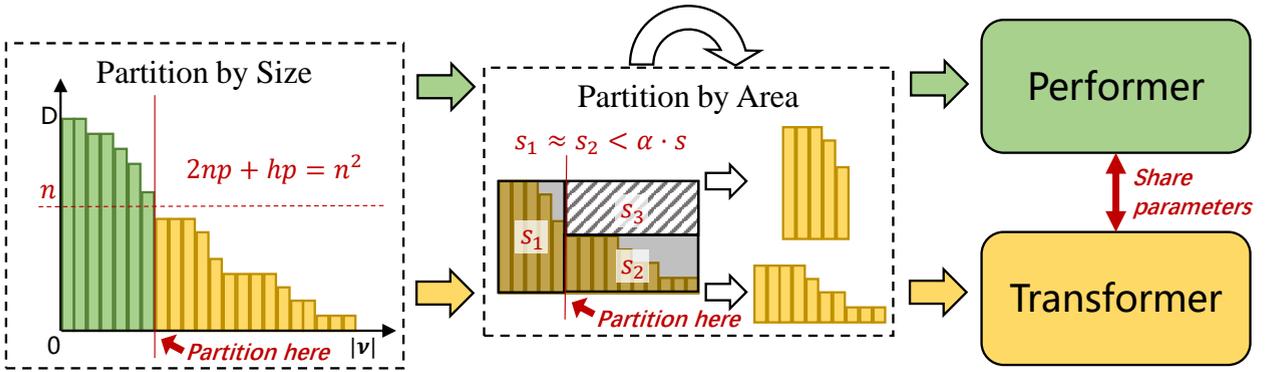}
\caption{
    Partitioning neighbourhoods into multiple groups for space and time efficiency.
    We partition neighbourhoods into two groups based on the degrees of their central nodes.
    Neighbourhoods with more than $n$ nodes will be processed by Performer for efficiency and other small neighbourhoods will be processed by Transformer for accuracy.
    Each group of neighbourhoods is recursively partitioned into two approximately equal halves in terms of area ($s_1 \approx s_2$), provided that the partitioning leads to a considerable compression rate $\alpha$.
}
\label{fig:section}
\end{figure*}

As we have discussed in \Secref{sec:intro}, the primary obstacle to the practical application of NT is the excessive complexity arising from the uneven distribution of neighbourhood scales.
To address this challenge, we propose a neighbourhood partitioning strategy that partitions all neighbourhoods into several smaller groups.
Each group is processed through an attention module that can be switched between the precise self-attention of the Transformer and the computationally efficient linear-attention of the Performer.
We depict this strategy in \Figref{fig:section} and provide a detailed description in the subsequent text.

\subsection{Partitioning by Size for Switchable Attention}

The self-attention mechanism, with its quadratic complexity, necessitates a significant amount of memory when applied to large neighbourhoods.
To mitigate this issue, we implement a switchable attention module that processes neighbourhoods with different sizes.
This module employs the linear-attention of the Performer to handle neighbourhoods exceeding $n$ nodes, prioritizing efficiency, and switches to the self-attention of the Transformer for neighbourhoods with $n$ nodes or fewer, ensuring accuracy.
Notably, both attention algorithms share a single set of parameters, which is made possible by the Performer's remarkable property of full compatibility with the Transformer.
As detailed in \Secref{sec:perf}, the self-attention in the Transformer computes an $n \times n$ matrix $\mQ \cdot \mK^T$, whereas the linear-attention in the Performer computes two $n \times p$ matrices $\hat\mQ, \hat\mK$ and a $p \times h$ matrix $\hat\mK^T \cdot \mV$.
Here, $p$ and $h$ represent the dimensions of the orthogonal random features in the Performer and the node representations, respectively.
Thus, we set $n = p + \sqrt{p^2 + hp}$ as the solution of $n^2 = 2np + hp$ to ensure that the linear-attention indeed reduces the memory footprint compared to the original self-attention.

\subsection{Recursively Partitioning by Area for Space-and-Time Efficiency}

In scenarios where we have $N$ neighbourhoods with the largest one contains $D$ nodes, processing the neighbourhoods in parallel necessitates padding the node features into an $N \times D \times d$ tensor (with an $N \times D$ boolean matrix to indicate the paddings).
Due to the long-tail distribution characteristic of real-world graphs, this pre-processing step often results in substantial space wastage due to padding and an increase in time due to redundant computation.
To address this issue, we propose to partition the neighbourhoods into smaller groups and process them sequentially rather than in a single paralleled operation.
To formalize our analysis, we outline the following two assumptions to measure the memory usage in neighbourhood processing.
\begin{assumption}[Paralleled Processing]
The memory consumption of applying the transformer to a group of neighbourhoods is proportional to the group's area, defined as the product of the number of neighbourhoods and their maximum size.
\end{assumption}
\begin{assumption}[Sequential Processing]
Sequentially processing two neighbourhood groups with areas $s_1$ and $s_2$ consumes memory proportional to $\max(s_1, s_2)$.
\end{assumption}

As the example in \Figref{fig:section} shows, partitioning neighbourhoods into two groups and sequentially processing them results in a memory footprint proportional to the area of $\max(s_1, s_2)$, which is smaller than the memory footprint of the original group, proportional to the area of $s = s_1 + s_2 + s_3$.
However, searching for the optimal partitioning with $\min \max(s_1, s_2)$ has a complexity of $o(2^N)$.
To accelerate the searching, we propose the following theorem:
\begin{theorem}\label{thm:section}
Given any partitioning that divides neighbourhoods into two groups, there exists an alternative partitioning that requires the same or less processing memory, where all neighbourhoods in one group are not smaller than those in the other group.
\end{theorem}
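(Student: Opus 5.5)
We argue by a direct exchange construction rather than by a sequence of local swaps. Fix an arbitrary partitioning of the $N$ neighbourhoods into groups $G_1$ and $G_2$, with sizes $N_1 = |G_1|$, $N_2 = |G_2|$ and maximum neighbourhood sizes $D_1$, $D_2$. By the Paralleled Processing assumption their areas are $s_1 = N_1 D_1$ and $s_2 = N_2 D_2$. By the Sequential Processing assumption the memory cost of the partitioning is proportional to $\max(s_1, s_2)$. Without loss of generality let $D_1 \ge D_2$, so that $G_1$ contains a neighbourhood of the globally maximal size $D$ and $D_1 = D$.

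Sort all $N$ neighbourhoods in non-increasing order of size. Define the alternative partitioning by putting the $N_1$ largest neighbourhoods into $G_1'$ and the remaining $N_2$ smallest ones into $G_2'$, breaking ties arbitrarily. Every neighbourhood in $G_1'$ is then not smaller than every neighbourhood in $G_2'$, which is the structural property the theorem requires. The group cardinalities are unchanged, $|G_1'| = N_1$ and $|G_2'| = N_2$, so only the maxima need to be compared.

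For $G_1'$, its maximum is $D = D_1$, so its area is $s_1' = N_1 D_1 = s_1$.

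For $G_2'$, let $D_2'$ be its maximum, which is the size of the $(N_1+1)$-th largest neighbourhood overall. The key inequality is $D_2' \le D_2$. Suppose instead $D_2' > D_2$. Then at least $N_1 + 1$ neighbourhoods have size strictly greater than $D_2$. All of them must lie in $G_1$, since every member of $G_2$ has size at most $D_2$. This contradicts $|G_1| = N_1$. Hence $s_2' = N_2 D_2' \le s_2$, and therefore $\max(s_1', s_2') \le \max(s_1, s_2)$.

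We expect this counting step, showing that $D_2' \le D_2$, to be the only real obstacle. It relies on choosing $G_1$ as the group that contains the global maximum, which the reduction $D_1 \ge D_2$ secures. The remaining points to note are ties in sizes, which do not affect any maxima, and empty groups, whose area is zero and which are handled trivially. As a corollary, one only needs to search over the $N+1$ split points of the sorted order rather than all $2^N$ subsets, which is what motivates the theorem.
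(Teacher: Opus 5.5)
Your proof is correct, but it reaches the sorted partitioning in one step, whereas the paper gets there by iterated exchanges.

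The paper fixes the cardinalities $c_1, c_2$. It then repeatedly swaps the smallest neighbourhood of $G_1$ with the largest of $G_2$ whenever the former is smaller. After each swap it checks that $G_1$ keeps maximum $d_1$ and that the new maximum of $G_2$ is $\max(d_3,d_4)\le d_2$; the paper's written $\min(d_3,d_4)$ is a slip. It iterates until no inversion remains.

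You jump straight to the end state of that process: the $N_1$ largest neighbourhoods against the rest. You replace the per-swap bookkeeping with a single counting argument: if the $(N_1+1)$-th largest size exceeded $D_2$, then $N_1+1$ neighbourhoods would all have to lie in $G_1$.

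Your route buys three things. You need no termination argument for the swap loop; the paper leaves this implicit, though it follows because each swap strictly increases the total size in $G_1$. You need not track a second-largest element. And you get explicitly that each area individually does not increase. The paper's local-exchange version, in turn, shows that every intermediate partition along the way is also no worse.

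One small correction to your closing remark. The counting step $D_2'\le D_2$ does not use the normalisation $D_1\ge D_2$. That normalisation is what you need for $s_1'=s_1$: if $G_2$ held the global maximum, moving it into a group of $N_1$ neighbourhoods could increase that group's area.
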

\begin{proof}
Considering a partitioning where group $G_1$ contains $c_1$ neighbourhoods and group $G_2$ contains $c_2$ neighbourhoods, with the maximum size $d_1$ of neighbourhoods in $G_1$ being not smaller than that $d_2$ of $G_2$ ($d_1 \ge d_2$).
If the smallest neighbourhood (with size $d_3$) in $G_1$ is smaller than the largest one (with size $d_2 > d_3$) in $G_2$, we can swap their positions to create two new groups $G'_1$ and $G'_2$.
The area of $G'_1$ remains unchanged since $\max(d_1, d_2) \times c_1 = d_1 \times c_1$, while the area of $G'_2$ is unchanged or reduced since $\min(d_3, d_4) \times c_2 \le d_2 \times c_2$, where $d_4$ is the size of the second-largest neighbourhood in $G_2$.
We can keep swapping the smallest neighbourhood in the first group with the largest neighbourhood in the second group if the former is smaller than the latter until any neighbourhood in the first group is not smaller than those in the second, with the processing memory remaining unchanged or reduced.
\end{proof}
\noindent With this theorem, we conclude that the optimal partitioning can be achieved by first ordering the neighbourhoods and then scanning linearly for the partitioning point, with a complexity of $o(N \log N)$, accounting for the sorting step.

\begin{algorithm}[tb]
    \caption{Recursively Partitioning Neighbourhoods by Area}\label{alg:section}
    \begin{algorithmic}[1]
        \REQUIRE compression rate $\alpha$, a list $[(n_1, c_1), (n_2, c_2), \ldots, (n_l, c_l)]$ with $n_1 > n_2 > \ldots > n_l$ and $\sum\limits_{i=1}^l c_i = |\gV|$ where an element $(n_i, c_i)$ indicates $c_i$ neigbourhoods with size $n_i$.
        \ENSURE a set $\sS$ where an element $(i, j, s)$ indicates a group of neighbourhoods sized in $[n_i, n_j]$ and its area is $s$.
        \STATE \textbf{Define:} $\text{Area}(i, j) = (a_j - a_i + c_i) \times n_i, a_i = \sum\limits_{j=1}^i c_j$
        \STATE \textbf{Initialize:} $\sS := \{ (1, l, \text{Area}(1, l)) \}$
        \WHILE{true}
            \STATE select $(i, j, s)$ from $\sS$ with maximum $s$
            \IF{$ j = i $}
                \RETURN $\sS$
            \ENDIF
            \STATE $t^* := \argmin\limits_{t=i}^j \max(\text{Area}(i, t), \text{Area}(t+1, j))$
            \STATE $s_1 := \text{Area}(i, t^*)$
            \STATE $s_2 := \text{Area}(t^*+1, j)$
            \IF{$\max(s_1, s_2) \ge \alpha \cdot s$}
                \RETURN $\sS$
            \ENDIF
            \STATE $\sS := \sS \setminus \{ (i, j, s) \} \cup \{ (i, t^*, s_1), (t^*+1, j, s_2) \}$
        \ENDWHILE
    \end{algorithmic}
\end{algorithm}

Although the sequential handling of the partitioned neighbourhood groups may increase the processing time, a substantial enough saving in redundant computation on padded bits (e.g. the $s_3$ part) can offset this time increase.
We therefore adopt the partitioning only when the compression rate $\alpha = \max(s_1, s_2) / s$ is considerably small and develop an algorithm, described in \Algref{alg:section}, to recursively partition neighbourhoods into multiple groups for both space and time efficiency.
In the algorithm, we define a function to calculate the area of a neighbourhood group as the product of the number of its containing neighbourhoods and their maximum size (in line 1).
Then, we initialize the algorithm with all inputted neighbourhoods as a single group (in line 2).
In the main loop, we repeatedly select the group with the largest area from the set $\sS$ (in line 4) and partition it into two halves with their maximum areas minimal (in line 8).
If the partitioning leads to a considerable compression rate (in line 11), we accept the partitioning and replace the group with its two halves (in line 14).
By adjusting the hyperparameter $\alpha$, we can control the tradeoff between memory usage and processing time.
Our empirical findings suggest that $\alpha = 0.4$ is a good balance, resulting in fast processing with relatively low memory consumption.
When a group is atomic (line 6) or a partitioning is refused (line 12), we terminate partitioning other smaller groups and output the group set $\sS$. 

\section{Experiments}\label{sec:exp}

To evaluate the performance of the Neighbourhood Transformer (NT) and the proposed neighbourhood partitioning strategy with switchable attention, we design a series of node classification experiments on a diverse set of graphs.
The first five datasets are heterophilic graphs in Table~\ref{tbl:het}, including Roman Empire, A-ratings, Minesweeper, Tolokers, and Questions~\cite{GAT-sep}.
The latter five datasets are homophilic graphs in Table~\ref{tbl:homo}, including A-computer, A-photo~\cite{Amazon}, CoauthorCS, CoauthorPhy~\cite{HomoGraph}, and WikiCS~\cite{WikiCS}.
We use the adjusted homophily metric, as introduced in \citet{GAT-sep}, to quantify the degree of homophily within a graph.
It is evident from the measurements that heterophilic graphs exhibit lower homophily scores across the board.
We use the default data splits provided with the original datasets for the five heterophilic graphs and WikiCS.
For other four homophilic graphs, we follow the splitting strategy from \citet{Exphormer}, dividing the nodes into training (60\%), validation (20\%), and testing (20\%) sets.

Our experimental setup involves the integration of NT into the GAT-sep architecture, as proposed in \citet{GAT-sep}.
Specifically, the network architecture is structured as follows: it begins with a linear encoder, followed by $L$ residual blocks, and concludes with a linear predictor.
Each residual block incorporates a skip connection~\cite{SkipConnection} and consists of a layer normalization layer, an NT layer, and a two-layer multi-layer perceptron (MLP).
For model training, we utilize the Adam optimizer~\cite{Adam}.

\subsection{Node Classification Performance}\label{sec:node}

\begin{table*}[!t]
\caption{
    Averaged accuracy scores and the standard deviations in 10 runs on heterophilic graphs.
    The best score for each dataset is \textbf{\textcolor{red}{bolded}}, and the second best is \underline{\textcolor{blue}{underlined}}.
}\label{tbl:het}
\begin{tabular}{lrrrrr}
\toprule
            & Roman Empire & A-ratings & Minesweeper & Tolokers & Questions\\
\#Nodes     & 22,662       & 24,492    & 10,000      & 11,758   & 48,921 \\
\#Edges     & 32,927       & 93,050    & 39,402      & 519,000  & 153,540 \\
Mean Deg.   & 2.91         & 7.60      & 7.88        & 88.28    & 6.28  \\
\#Features  & 300          & 300       & 7           & 10       & 301  \\
\#Classes   & 18           & 5         & 2           & 2        & 2  \\
Homophily   & -4.68\%      & 14.02\%   & 0.94\%      & 9.26\%   & 2.07\%  \\
\midrule
GCN  & 73.69±0.74 & 48.70±0.63 & 89.75±0.52 & 83.64±0.67 & 76.09±1.27 \\
GraphSAGE & 85.74±0.67 & \underline{\textcolor{blue}{53.63±0.39}} & 93.51±0.57 & 82.43±0.44 & 76.44±0.62 \\
GAT & 80.87±0.30 & 49.09±0.63 & 92.01±0.68 & 83.70±0.47 & 77.43±1.20 \\
H2GCN & 60.11±0.52 & 36.47±0.23 & 89.71±0.31 & 73.35±1.01 & 63.59±1.46 \\
CPGNN & 63.96±0.62 & 39.79±0.77 & 52.03±5.46 & 73.36±1.01 & 65.96±1.95 \\
GPRGNN & 64.85±0.27 & 44.88±0.34 & 86.24±0.61 & 72.94±0.97 & 55.48±0.91 \\
FAGCN & 65.22±0.56 & 44.12±0.30 & 88.17±0.73 & 77.75±1.05 & 77.24±1.26 \\
GloGNN & 59.63±0.69 & 36.89±0.14 & 51.08±1.23 & 73.39±1.17 & 65.74±1.19 \\
GBK-GNN & 74.57±0.47 & 45.98±0.71 & 90.85±0.58 & 81.01±0.67 & 74.47±0.86 \\
JacobiConv & 71.14±0.42 & 43.55±0.48 & 89.66±0.40 & 68.66±0.65 & 73.88±1.16 \\
GGCN & 74.46±0.54 & 43.00±0.32 & 87.54±1.22 & 77.31±1.14 & 71.10±1.57 \\
tGNN & 79.95±0.75 & 48.21±0.53 & 91.93±0.77 & 70.84±1.75 & 76.38±1.79 \\
OrderedGNN~ & 77.68±0.39 & 47.29±0.65 & 80.58±1.08 & 75.60±1.36 & 75.09±1.00 \\
GAT-sep & 88.75±0.41 & 52.70±0.62 & 93.91±0.35 & 83.78±0.43 & 76.79±0.71 \\
Dir-GNN & 91.23±0.32 & 47.89±0.39 & 87.05±0.69 & 81.19±1.05 & 76.13±1.24 \\
CDE & \underline{\textcolor{blue}{91.64±0.28}} & 47.63±0.43 & \underline{\textcolor{blue}{95.50±5.23}} & - & 75.17±0.99 \\
BloomGML & 85.26±0.25 & 52.92±0.39 & 93.30±0.16 & \textbf{\textcolor{red}{85.92±0.14}} & \underline{\textcolor{blue}{77.93±0.34}} \\
\textbf{NT} & \textbf{\textcolor{red}{94.77±0.31}} & \textbf{\textcolor{red}{54.25±0.50}} & \textbf{\textcolor{red}{97.42±0.50}} & \underline{\textcolor{blue}{85.69±0.54}} & \textbf{\textcolor{red}{78.46±1.10}} \\
\bottomrule
\end{tabular}
\end{table*}

\begin{table*}[!t]
\caption{
    Averaged accuracy scores and the standard deviations in 10 runs on homophilic graphs.
    The best score for each dataset is \textbf{\textcolor{red}{bolded}}, and the second best is \underline{\textcolor{blue}{underlined}}.
}\label{tbl:homo}
\begin{tabular}{lrrrrr}
\toprule
             & A-computer & A-photo & CoauthorCS & CoauthorPhy & WikiCS \\
\#Nodes      & 13,752     & 7,650   & 18,333     & 34,493      & 11,701 \\
\#Edges      & 245,861    & 119,081 & 81,894     & 247,962     & 216,123 \\
Mean Deg.    & 35.76      & 31.13   & 8.93       & 14.38       & 36.85 \\
\#Features   & 767        & 745     & 6,805      & 8,415       & 300  \\
\#Classes    & 10         & 8       & 15         & 5           & 10 \\
Homophily    & 68.23\%    & 78.50\% & 78.45\%    & 87.24\%     & 57.90\% \\
\midrule
GCN & 89.65±0.52 & 92.70±0.20 & 92.92±0.12 & 96.18±0.07 & 77.47±0.85 \\
GraphSAGE & 91.20±0.29 & 94.59±0.14 & 93.91±0.13 & 96.49±0.06 & 74.77±0.95 \\
GAT & 90.78±0.13 & 93.87±0.11 & 93.61±0.14 & 96.17±0.08 & 76.91±0.82 \\
APPNP & 90.18±0.17 & 94.32±0.14 & 94.49±0.07 & 96.54±0.07 & 78.87±0.11 \\
PPRGo & 88.69±0.21 & 93.61±0.12 & 92.52±0.15 & 95.51±0.08 & 77.89±0.42 \\
GCNII & 91.04±0.41 & 94.30±0.20 & 92.22±0.14 & 95.97±0.11 & 78.68±0.55 \\
GPRGNN & 89.32±0.29 & 94.49±0.14 & 95.13±0.09 & 96.85±0.08 & 78.12±0.23 \\
GGCN & 91.81±0.20 & 94.50±0.11 & \underline{\textcolor{blue}{95.25±0.05}} & \underline{\textcolor{blue}{97.07±0.05}} & 78.44±0.53 \\
tGNN & 83.40±1.33 & 89.92±0.72 & 92.85±0.48 & 96.24±0.24 & 71.49±1.05 \\
OrderedGNN & \underline{\textcolor{blue}{92.03±0.13}} & \underline{\textcolor{blue}{95.10±0.20}} & 95.00±0.10 & 97.00±0.08 & \underline{\textcolor{blue}{79.01±0.68}} \\
\textbf{NT} & \textbf{\textcolor{red}{92.61±0.63}} & \textbf{\textcolor{red}{96.12±0.39}} & \textbf{\textcolor{red}{96.07±0.32}} & \textbf{\textcolor{red}{97.32±0.11}} & \textbf{\textcolor{red}{80.04±0.61}} \\
\bottomrule
\end{tabular}
\end{table*}

We report the average accuracy scores of NT across 10 runs on five heterophilic graphs in Table~\ref{tbl:het} and five homophilic graphs in Table~\ref{tbl:homo}.
Baselines are state-of-the-art (SotA) message passing neural networks (MPNN), including GCN~\cite{GCN}, GraphSAGE~\cite{SAGE}, GAT~\cite{GAT}, H2GCN~\cite{H2GCN}, CPGNN~\cite{CPGNN}, GPRGNN~\cite{GPRGNN}, FAGCN~\cite{FAGCN}, GloGNN~\cite{GloGNN}, GBK-GNN~\cite{GBK-GNN}, JacobiConv~\cite{JacobiConv}, GGCN~\cite{GGCN}, tGNN~\cite{tGNN}, OrderedGNN~\cite{OrderedGNN}, GAT-sep~\cite{GAT-sep}, Dir-GNN~\cite{Dir-GNN}, CDE~\cite{CDE}, BloomGML~\cite{BloomGML}, APPNP~\cite{PPNP}, PPRGo~\cite{PPRGO}, and GCNII~\cite{GCNII}.
Their node classification scores on the 10 datasets are retrieved from previous works, including their original papers~\cite{CDE,BloomGML} and leaderboards of the respective datasets~\cite{GAT-sep, Polynormer}.
The training protocol is constrained to a maximum of 2500 epochs with the learning rate for the optimizer is set at $0.001$.
An early stopping mechanism is implemented to terminate training when there is no improvement in the validation set performance for 500 consecutive epochs.
The selection of other hyperparameters is facilitated by Optuna~\cite{Optuna}, which is used to perform a search over the following parameters: the aggregator type, including $\mathrm{mean}$, weighted-$\mathrm{mean}$, $\mathrm{sum}$, gated-$\mathrm{sum}$, and $\mathrm{max}$; the number of hidden dimensions per attention head, ranging from 8 to 64; the number of attention heads, ranging from 1 to 8; the number of layers, ranging from 1 to 5; and the dropout rate, which is searched within the set $\{0.1, 0.2, \ldots, 0.8\}$.
The optimal hyperparameters identified through this search are summarized in \Appref{app:params} and recorded in our released code.

Table~\ref{tbl:het} and Table~\ref{tbl:homo} reveal that NT outperforms the existing SotA MPNNs on 8 of the 10 graphs and ranks as the second best on the rest two datasets.
The results on the five heterophilic graphs provide strong evidence that NT is a powerful approach for node classification tasks on heterophilic graphs, where traditional MPNNs often struggle due to the lack of homophily.
Besides, the results on the five homophilic graphs confirm our earlier analysis that NT is capable of adapting to the homophily present in graphs, making it a robust framework that is general for graph representation learning.
Thus, our claim that NT's message exchanging paradigm is a novel and competitive graph learning approach compared with the message passing paradigm is validated.

\subsection{The Efficiency of Neighbourhood Partitioning}

\begin{figure*}[!t]
\centering
\includegraphics[width=\textwidth]{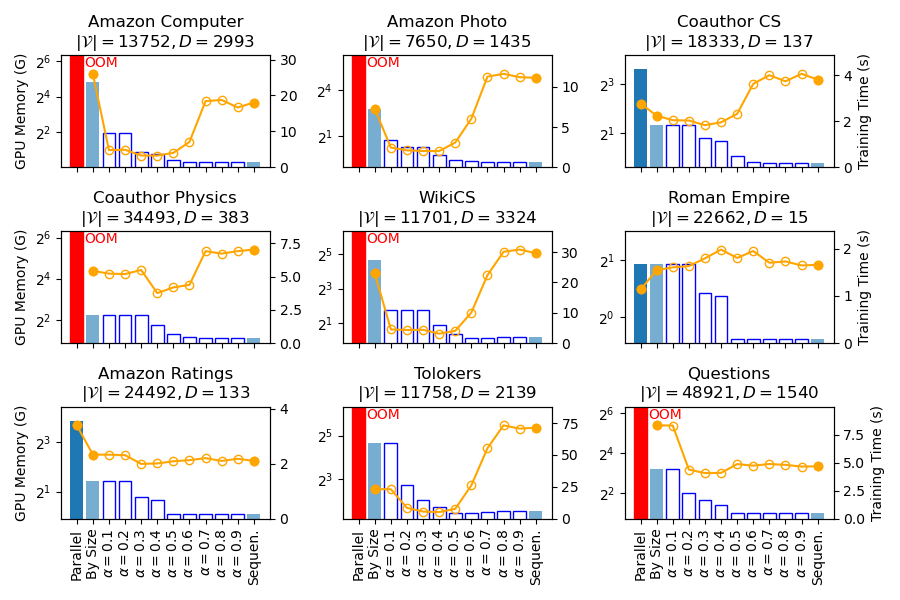}
\caption{
    Ablation studies on the neighbourhood partitioning strategy.
    Bars represent memory footprints and curves are time consumptions.
    We compare paralleled processing (the first bar/point), partitioning neighbourhoods by size (the second bar/point), by both size and area with $\alpha = 0.1, 0.2, \ldots, 0.9$ (the hollow bars/points), and sequential processing (the last bar/point).
}
\label{fig:ntmem}
\end{figure*}

\begin{figure*}[!t]
\centering
\includegraphics[width=\textwidth]{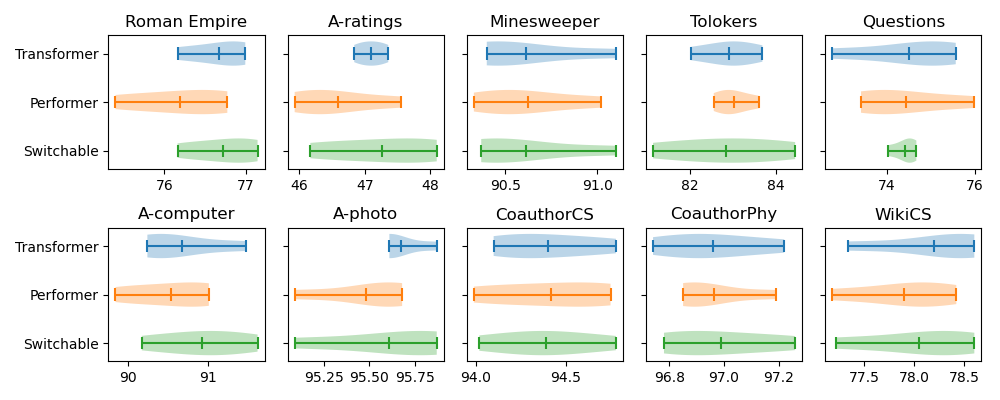}
\caption{
    Accuracy scores (\%, the horizontal axes) in 10 runs with different attention modules: Transformer with self-attention, Performer with linear-attention, and our switchable attention module.
    The middle line in each violin plot represents the average score.
}
\label{fig:frame}
\end{figure*}

In this section, we benchmark NT using nine datasets to elucidate the attributes of the neighbourhood partitioning strategy.
\Figref{fig:ntmem} provides a visual representation of the GPU memory consumption and training time for parallel processing, partitioning neighbourhoods by size, by both size and area, and sequential processing.
Parallel processing runs out of 80GB memory (OOM, depicted as red bars) in six out of the nine datasets and is only feasible on graphs with a low maximum node degree $D$.
By partitioning neighbourhoods by size and incorporating Performer, the memory footprint is dramatically decreased to below 30GB across all graph datasets.
With additional partitionings by area, the memory footprint is further reduced, for instance, to less than 4GB when $\alpha = 0.4$.
This reduction in GPU memory utilization is crucial for the practical application of NT.

Moreover, as illustrated in the figure, the training time curves exhibit a bowl-shaped pattern with the minimum points occurring around $\alpha = 0.4$, showing that our method can be an order of magnitude faster than sequential processing (e.g., 8.63 times faster on WikiCS and 12.64 times faster on Tolokers).
The only exception is observed on Roman Empire, where our approach is 16\% slower compared to sequential processing.
This discrepancy arises due to the highly concentrated distribution of node degrees in Roman Empire, which has an average node degree of 2.91.
In such a scenario, the simplicity of the graph structure allows sequential processing to handle all neighbourhoods within a limited number of operations, negating the benefits of our partitioning strategy.

As the above experiments indicate, the integration of our switchable attention module with the linear attention mechanism of Performer substantially diminishes the memory and computational demands of NT.
To investigate whether this integration impairs node classification accuracy, we conducted an ablation study focusing on the attention module.
\Figref{fig:frame} displays the accuracy scores for different attention modules across ten datasets.
As depicted in the figure, while the Performer with linear attention lags on Roman Empire, A-ratings, A-computer, A-photo, and WikiCS, there is no statistically significant discrepancy in accuracy between the Transformer with full-rank self-attention and our proposed switchable attention module.
Consequently, we deduce that switching between different attentions does not degrade classification accuracy.

In summary, our neighbourhood partitioning strategy successfully diminishes the space and time complexities associated with NT, all while maintaining its performance integrity.

\subsection{Additional Ablation Studies}

Other than \Figref{fig:frame} on different attention modules, we have conduct additional ablation studies to thoroughly reveal the property of Neighbourhood Transformers.

\subsubsection{On Aggregators}

\begin{figure*}[!t]
\centering
\includegraphics[width=0.95\textwidth]{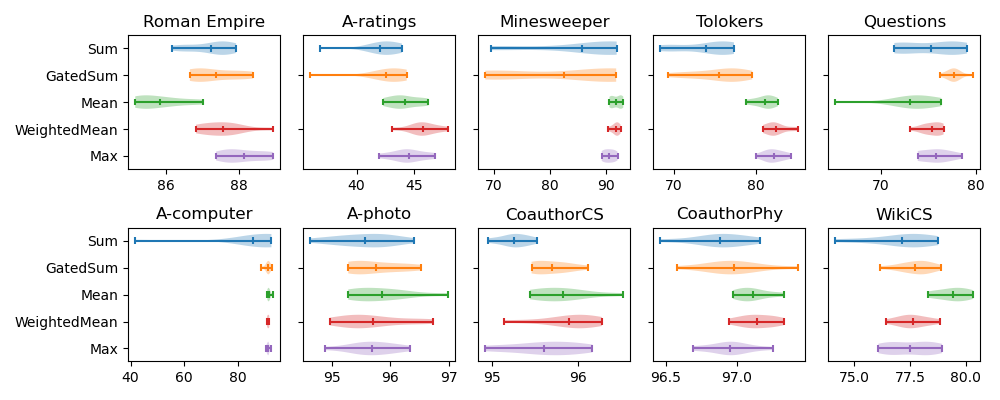}
\caption{
    Accuracy scores (\%, the horizontal axes) in 10 runs with different aggregators.
    The middle line in each violin plot represents the average score.
}
\label{fig:agg}
\end{figure*}

In this section, we perform an ablation study on the aggregator used within our NT.
We evaluate five different aggregators: $\mathrm{mean}$, weighted-$\mathrm{mean}$, $\mathrm{sum}$, gated-$\mathrm{sum}$, and $\mathrm{max}$.
As shown in \Figref{fig:agg}, there is no clear trend in performance across different aggregators, with the exception that the $\mathrm{sum}$ aggregator tends to be unstable and often results in worse performance.
Among the tested aggregators, weighted-$\mathrm{mean}$ appears to be a more robust choice overall.
However, the gated-$\mathrm{sum}$ aggregator achieves the highest score on the Questions dataset, while the $\mathrm{mean}$ aggregator performs best on the WikiCS dataset.
This suggests that the choice of aggregator can significantly impact the performance of NT and that the optimal aggregator may vary depending on the specific characteristics of the dataset.

\subsubsection{On Directed Graphs}

\begin{table}[!t]
\caption{
    Averaged accuracy scores and the standard deviations in 10 runs on heterophilic graphs.
}\label{tbl:dir}
\begin{tabular}{lccccc}
\toprule
     & Roman Empire & A-ratings & Minesweeper & Tolokers & Questions\\
\midrule
    GCN         & 73.69±0.74 & 48.70±0.63 & 89.75±0.52 & 83.64±0.67 & 76.09±1.27 \\
    Dir-GNN & 91.23±0.32 & 47.89±0.39 & 87.05±0.69 & 81.19±1.05 & 76.13±1.24 \\
    \textbf{NT}            & 91.71±0.57 & 54.25±0.50 & 97.42±0.50 & 85.69±0.54 & 78.46±1.10 \\
    \textbf{Dir-NT}        & 94.77±0.31 & 49.43±0.62 & 93.92±0.59 & 85.02±0.77 & 77.99±1.00 \\
\bottomrule
\end{tabular}
\end{table}

We report the performance of Dir-GNN~\cite{Dir-GNN} and the directed version of NT (Dir-NT) in Table~\ref{tbl:dir}, where the data clearly indicate that Dir-NT surpasses Dir-GNN across all tested datasets, with a particularly impressive accuracy score of 94.77 on the Roman Empire graph.
This demonstrates that Dir-NT is more effective at leveraging the directional information in edges compared to Dir-GNN.
However, we observe that on the A-ratings and Minesweeper datasets, both Dir-GNN and Dir-NT underperform compared to their undirected counterparts.
This discrepancy can be attributed to the nature of these datasets.
Although \citet{GAT-sep} has annotated these datasets with uni-directional edges, the relationships they represent, such as co-purchased products in A-ratings and adjacent grids in Minesweeper, are inherently undirected.
Consequently, modelling these graphs as directed does not provide any additional beneficial information.
Instead, it splits the neighbourhood into source and destination halves, which can interfere with the full exchange of monophilic messages.
Similarly, the Tolokers dataset, which represents a network of project colleagues, is also fundamentally undirected.
However, its significantly higher density (more than 10 times denser than the other four graphs) means that dividing the neighbourhood into two parts has a smaller negative impact on performance.

\subsubsection{On Embeddings Separation}\label{sec:sep}

\begin{figure*}[!t]
\centering
\includegraphics[width=\textwidth]{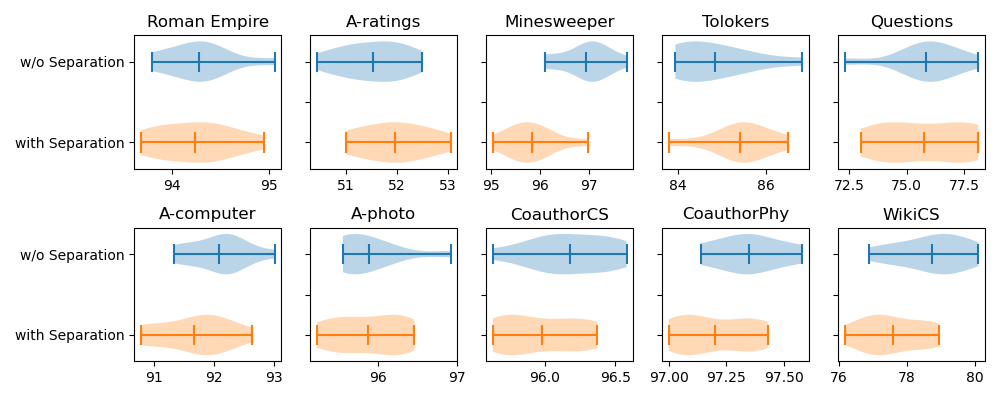}
\caption{
    Accuracy scores (\%, the horizontal axes) in 10 runs with and without the separation of ego- and neighbour-embeddings.
    The middle line in each violin plot represents the average score.
}
\label{fig:sep}
\end{figure*}

\citet{H2GCN} shows that the challenge posed by heterophily in graphs can be mitigated by distinguishing between ego-embeddings (representations of the central node itself) and neighbour-embeddings (representations of the node's neighbours) during the aggregation process.
Can this experience of message passing (MP) be brought into NT?
We answer this question by presenting \Figref{fig:sep}, which shows the outcomes of our study on whether to implement this separation in NT.
The results indicate that except for the A-ratings and Tolokers datasets, adopting the separation of ego- and neighbour-embeddings leads to a decline in performance on 8 out of 10 graphs.
The rationale behind this is that in heterophilic graphs, the central node tends to be dissimilar to its neighbouring nodes.
Therefore, these dissimilar nodes require different transformations before aggregating and the separation strategy addresses this requirement.
However, NT is to aggregate messages attentively from 0- and 2-hop neighbours, conditioned on 1-hop neighbours.
This inherently separation of ego-, 1-hop-neighbour, and 2-hop-neighbour-embeddings suggests that NT's inherent ability to handle the aggregation of node information may already be sufficient to capture the complex relationships in heterophilic graphs, rendering the separation of ego- and neighbour-embeddings an unnecessary step for improving performance in most cases.

\subsubsection{On Self-loops}

\begin{figure*}[!t]
\centering
\includegraphics[width=\textwidth]{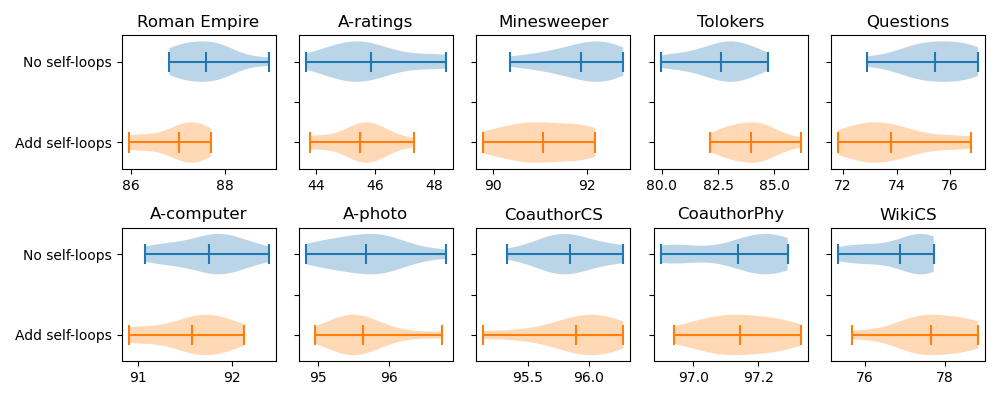}
\caption{
    Accuracy scores (\%, the horizontal axes) in 10 runs with and without self-loops.
    The middle line in each violin plot represents the average score.
}
\label{fig:selfloops}
\end{figure*}

In this section, we investigate the impact of adding self-loops to nodes in NT.
Adding self-loops, which are edges connecting a node to itself, is a technique often used to modify the graph spectrum and facilitate the learning of smoother representations~\cite{SGC}.
When self-loops are added in NT, each node effectively becomes part of its own neighbourhood.
This inclusion introduces an inductive bias towards homophily, as it assumes that nodes are similar to their neighbours, which may not always be the case in heterophilic graphs.
The ablation study presented in this section, as depicted in \Figref{fig:selfloops}, reveals that incorporating self-loops can occasionally degrade performance on heterophilic graphs due to the aforementioned incorrect inductive bias.
In contrast, for homophilic graphs, the addition of self-loops does not lead to an accuracy increase on 4 out of 5 datasets.
This suggests that NT is already adept at capturing the homophily present in these graphs, and thus, the extra self-loops do not contribute additional benefits.
These findings highlight the importance of considering the underlying graph structure and the nature of the relationships between nodes when deciding on the use of self-loops in graph representation learning models like NT.

\section{Conclusions}

We introduce Neighbourhood Transformers (NT) designed to exploit the monophily observed in real-world graphs.
NT employs self-attention mechanisms within each neighbourhood of the graph, generating informative messages for the nodes within, as opposed to the central node.
This exploitation allows NT to effectively address heterophily and to be adaptive in homophilic graphs.
We overcome the space and computational challenges inherent to NT with a neighbourhood partitioning strategy, thereby enabling the practical implementation of NT on standard hardware.
Comprehensive experimental results validate the superior performance of NT on both heterophilic and homophilic graphs when compared to the current state-of-the-art methods.

%
%
%
%
%

\appendix

\section{Hyperparameters}\label{app:params}

\begin{table*}[!t]
\caption{
    Hyperparameters of NT on 10 graphs.
}\label{tbl:params}
\begin{tabular}{lrrrrr}
\toprule
     & Aggregator & \#dimensions & \#heads & \#layers & Dropout \\
\midrule
    Roman Empire            & $\mathrm{sum}$           & 32 & 6 & 5 & 0.4 \\
    A-ratings               & $\mathrm{mean}$          & 40 & 8 & 1 & 0.3 \\
    Minesweeper             & $\mathrm{sum}$           & 53 & 1 & 5 & 0.2 \\
    Tolokers                & gated-$\mathrm{sum}$     & 30 & 2 & 5 & 0.1 \\
    Questions               & $\mathrm{sum}$           & 32 & 4 & 1 & 0.2 \\
    Roman Empire (directed) & $\mathrm{max}$           & 36 & 5 & 5 & 0.4 \\
    A-ratings (directed)    & $\mathrm{max}$           & 23 & 7 & 4 & 0.4 \\
    Minesweeper (directed)  & $\mathrm{sum}$           & 15 & 2 & 5 & 0.1 \\
    Tolokers (directed)     & gated-$\mathrm{sum}$     &  9 & 4 & 4 & 0.2 \\
    Questions (directed)    & gated-$\mathrm{sum}$     & 27 & 7 & 1 & 0.3 \\
    A-computer              & $\mathrm{sum}$           & 17 & 4 & 5 & 0.4 \\
    A-photo                 & $\mathrm{mean}$          & 18 & 7 & 4 & 0.6 \\
    CoauthorCS              & weighted-$\mathrm{mean}$ & 41 & 8 & 2 & 0.3 \\
    CoauthorPhy             & weighted-$\mathrm{mean}$ & 16 & 2 & 2 & 0.1 \\
    WikiCS                  & $\mathrm{mean}$          & 38 & 1 & 3 & 0.2 \\
\bottomrule
\end{tabular}
\end{table*}

\begin{table*}[!t]
\caption{
    Experimental settings of NT on ablation studies.
}\label{tbl:ablation}
\begin{tabular}{lrrrr}
\toprule
    & \Figref{fig:ntmem} & \Figref{fig:frame} & \Figref{fig:agg} & \Figref{fig:selfloops} \\
\midrule
    Aggregator    & $\mathrm{mean}$ & weighted-$\mathrm{mean}$ & |    & weighted-$\mathrm{mean}$\\
    \#dimensions  & 8               & 8                        & 8    & 8 \\
    \#heads       & 4               & 4                        & 8    & 8 \\
    \#layers      & 1               & 1                        & 2    & 2 \\
    Dropout       & 0               & 0.2                      & 0.2  & 0.2 \\
    Learning rate & 0.01            & 0.01                     & 0.01 & 0.01 \\
    \#Epochs      & 500             & 1000                     & 200  & 200 \\
    Early stop    & 50              & 200                      & 200  & 200 \\
\bottomrule
\end{tabular}
\end{table*}

The optimal hyperparameters identified for NT are summarized in Table~\ref{tbl:params}.
The results presented in \Figref{fig:sep} is obtained by exploring the identical hyperparameter space described in \Secref{sec:node}.
Other ablation studies are conducted with manually assigned experimental settings, as detailed in Table~\ref{tbl:ablation}.

\section{Analysis on the Descrepencies of Neighbourhood Sizes in Training and in Inference}\label{app:desc}

\begin{table}[!t]
\caption{
    The descrepencies of neighbourhood sizes in training and in inference.
}\label{tbl:desc}
\begin{tabular}{lr}
\toprule
    Dataset & Descrepency \\
\midrule
    Roman Empire & 3.5\% \\
    A-ratings    & 5.3\% \\
    Minesweeper  & 0.6\% \\
    Tolokers     & 8.8\% \\
    Questions    & 3.9\% \\
    A-computer   & 6.5\% \\
    A-photo      & 8.8\% \\
    CoauthorCS   & 6.5\% \\
    CoauthorPhy  & 7.2\% \\
    WikiCS       & 14.4\% \\
\bottomrule
\end{tabular}
\end{table}

To check if NT performs consistently when neighbourhood sizes are shifted from training to the inference stage, we conduct an analysis to measure the discrepancy between neighbourhood sizes in the training set and beyond.

We first approximate the averaged size of belonging neighbourhoods for each node using $\vs = \text{deg}(\mA^2) / \text{deg}(\mA)$, where $\mA$ is the adjacency matrix and $\text{deg}(\cdot)$ is a function to derive node degrees from the adjacency matrix.
Then, with 100 histogram bins, we transform elements of $\vs$ corresponding to the training set to distribution $P$ and the other elements to distribution $Q$.
After that, we calculate the discrepency between the two distributions as $\sum\limits_i |p_i - q_i|$, where $p_i$ is the probability of the $i$-th histogram bin of $P$ and $q_i$ is that of $Q$.

The discrepencies for the 10 datasets are reported in Table~\ref{tbl:desc}, indicating varying levels of discrepancy across the datasets, with Minesweeper showing low discrepancy and WikiCS showing high discrepancy.
Despite these variations, NT maintains consistent performance, as demonstrated by the experiments in the main text.

\section{Comparing With Graph Transformers}\label{app:gt}

This section presents the comparison between NT and state-of-the-art graph transformers (GT) by referencing the latest data from Polynormer's publication~\cite{Polynormer}, where GraphGPS~\cite{GraphGPS}, NAGphormer~\cite{NAGphormer}, Exphormer~\cite{Exphormer}, NodeFormer~\cite{NodeFormer}, DIFFormer~\cite{DIFFormer}, and GOAT~\cite{GOAT} are also reported.
As discussed in the Related Work section, we emphasize that vanilla GT cannot replace Message Passing Neural Networks (MPNN) as a standalone paradigm, due to their inherent topological information loss.
Consequently, modern GT-based methods are typically integrated with MPNNs (e.g., GraphGPS), and can be more precisely regarded as `GT-augmented MPNNs'.
Against this background, in addition to the vanilla NT proposed in this work, we further develop an enhanced variant NT* by substituting the MPNN component in Polynormer with an NT layer, yielding a `GT-augmented NT' architecture for fair comparison.

\begin{table*}[!t]
\caption{
    Averaged accuracy scores and the standard deviations in 10 runs on heterophilic graphs.
    The best score for each dataset is \textbf{\textcolor{red}{bolded}}, the second best is \underline{\textcolor{blue}{underlined}}, and the third is \textit{\textcolor{olive}{italic}}.
}\label{tbl:hetgt}
\begin{tabular}{lccccc}
\toprule
                & Roman Empire & A-ratings  & Minesweeper & Tolokers   & Questions    \\
\midrule
     GraphGPS   & 82.00±0.61   & 53.10±0.42 & 90.63±0.67  & 83.71±0.48 & 71.73±1.47 \\
     NAGphormer & 74.34±0.77   & 51.26±0.72 & 84.19±0.66  & 78.32±0.95 & 68.17±1.53 \\
     Exphormer  & 89.03±0.37   & 53.51±0.46 & 90.74±0.53  & 83.77±0.78 & 73.94±1.06 \\
     NodeFormer & 64.49±0.73   & 43.86±0.35 & 86.71±0.88  & 78.10±1.03 & 74.27±1.46 \\
     DIFFormer  & 79.10±0.32   & 47.84±0.65 & 90.89±0.58  & 83.57±0.68 & 72.15±1.31 \\
     GOAT       & 71.59±1.25   & 44.61±0.50 & 81.09±1.02  & 83.11±1.04 & 75.76±1.66 \\
     Polynormer & \textit{\textcolor{olive}{92.55±0.37}} & \textbf{\textcolor{red}{54.81±0.49}} & \underline{\textcolor{blue}{97.46±0.36}} & \textbf{\textcolor{red}{85.91±0.74}} & \underline{\textcolor{blue}{78.92±0.89}} \\
     \textbf{NT} & \textbf{\textcolor{red}{94.77±0.31}} & \textit{\textcolor{olive}{54.25±0.50}} & \textit{\textcolor{olive}{97.42±0.50}} & \textit{\textcolor{olive}{85.69±0.54}} & \textit{\textcolor{olive}{78.46±1.10}} \\
     \textbf{NT*} & \underline{\textcolor{blue}{92.90±0.30}} & \underline{\textcolor{blue}{54.55±0.39}} & \textbf{\textcolor{red}{97.92±0.28}} & \textbf{\textcolor{red}{85.91±0.83}} & \textbf{\textcolor{red}{79.36±0.69}} \\
\bottomrule
\end{tabular}
\end{table*}

\begin{table*}[!t]
\caption{
    Averaged accuracy scores and the standard deviations in 10 runs on homophilic graphs.
    The best score for each dataset is \textbf{\textcolor{red}{bolded}}, the second best is \underline{\textcolor{blue}{underlined}}, and the third is \textit{\textcolor{olive}{italic}}.
}\label{tbl:homogt}
\begin{tabular}{lccccc}
\toprule
           & A-computer & A-photo    & CoauthorCS & CoauthorPhy & WikiCS \\
\midrule
GraphGPS   & 91.19±0.54 & 95.06±0.13 & 93.93±0.12 & 97.12±0.19  & 78.66±0.49 \\
NAGphormer & 91.22±0.14 & 95.49±0.11 & \textit{\textcolor{olive}{95.75±0.09}} & \textbf{\textcolor{red}{97.34±0.03}} & 77.16±0.72 \\
Exphormer  & 91.47±0.17 & 95.35±0.22 & 94.93±0.01 & 96.89±0.09  & 78.54±0.49\\
NodeFormer & 86.98±0.62 & 93.46±0.35 & 95.64±0.22 & 96.45±0.28  & 74.73±0.94\\
DIFFormer  & 91.99±0.76 & 95.10±0.47 & 94.78±0.20 & 96.60±0.18  & 73.46±0.56 \\
GOAT       & 90.96±0.90 & 92.96±1.48 & 94.21±0.38 & 96.24±0.24  & 77.00±0.77 \\
Polynormer& \underline{\textcolor{blue}{93.68±0.21}} & \underline{\textcolor{blue}{96.46±0.26}} & 95.53±0.16 & 97.27±0.08 & \underline{\textcolor{blue}{80.10±0.67}}\\
\textbf{NT} & \textit{\textcolor{olive}{92.61±0.63}} & \textit{\textcolor{olive}{96.12±0.39}} & \underline{\textcolor{blue}{96.07±0.32}} & \textit{\textcolor{olive}{97.32±0.11}} & \textit{\textcolor{olive}{80.04±0.61}} \\
\textbf{NT*} & \textbf{\textcolor{red}{93.87±0.23}} & \textbf{\textcolor{red}{96.67±0.17}} & \textbf{\textcolor{red}{96.09±0.11}} & \underline{\textcolor{blue}{97.33±0.09}} & \textbf{\textcolor{red}{80.17±0.71}} \\
\bottomrule
\end{tabular}
\end{table*}

As shown in Table~\ref{tbl:hetgt} and Table~\ref{tbl:homogt}, NT* matches or surpasses all competing GT-based methods on 8 out of 10 evaluated datasets, except A-ratings and CoauthorPhy, demonstrating that the GT+NT composite framework is superior to the prevailing GT+MPNN paradigm.
Moreover, even the vanilla NT outperforms most GT-based methods across all datasets and consistently ranks among the top 3 models.
On heterophilic datasets listed in Table~\ref{tbl:hetgt}, we observe that vanilla NT achieves a substantial performance gain over GT methods on the Roman Empire dataset, owing to its explicit modeling of directional information.
In contrast, standard GT layers ignore node edges and fail to encode edge directionality, limiting their expressiveness on such graph structures.
Across the five homophilic graphs in Table~\ref{tbl:homogt}, the vanilla NT achieves an average rank of $(3+3+2+3+3) / 5 = 2.8$, which is even better than the average rank of $(2+2+5+4+2) / 5 = 3.0$ achieved by Polynormer.
This verifies that the standalone NT layer, even without auxiliary GT components, is highly competitive against state-of-the-art GT+MPNN models.

\section{The Difference Between Graph Partitioning Algorithms and Our Neighbourhood Partitioning}

The primary objective of our algorithm is not to partition a large graph into independent subgraphs for mini-batch training, akin to those graph partitioning algorithms~\cite{METIS, ClusterGCN, GraphSAINT}.
Instead, our algorithm groups graph nodes in a way that still requires full-batch training.
This is because, in NT, applying transformers in all neighborhoods is only an intermediate step; the final representation of a node is then aggregated from all its belonging neighborhoods.

We illustrate this difference with an example.
Suppose a graph partitioning algorithm divides a graph into isolated subgraphs A, B, and C.
A graph model can then be applied to one of these subgraphs, say A, to obtain the representation of a node, say $v$ that is in subgraph A.
However, in NT, even if we also divide graph nodes into groups A, B, and C, the process is different.
If node $v$ in group A has connected nodes $u$ in group B and $w$ in group C, the representation of node $v$ is not derived from applying the transformer in group A.
Instead, it is obtained by gathering information from the neighborhoods of nodes $u$ and $w$ after the transformer has been applied to groups B and C, respectively.
In summary, while the graph nodes are partitioned in NT, all their neighborhoods must still be processed by the transformer, with their results being combined later.

Our partitioning algorithm is designed to optimize both memory and computation time during the transformer's processing, before the results combination.
Thus, our method is tailored only to address the unique challenges that arise in NT and is fundamentally different from graph partitioning algorithms.

\printcredits

\bibliographystyle{cas-model2-names}

\bibliography{nt.bib}



\end{document}